\newtheorem{theorem}{Theorem}[section]
\newtheorem{lemma}[theorem]{Lemma}
\newtheorem{corollary}[theorem]{Corollary}
\newtheorem{proposition}[theorem]{Proposition}
\theoremstyle{definition}
\newtheorem{definition}[theorem]{Definition}
\theoremstyle{remark}
\newtheorem{remark}[theorem]{Remark}
\newcommand{\R}{\mathbb{R}}
\newcommand{\N}{\mathbb{N}}
\newcommand{\bits}{\{0,1\}}
\newcommand{\poly}{\mathrm{poly}}
\newcommand{\PP}{\mathbf{P}}
\newcommand{\fan}{\mathrm{fan\mbox{-}in}}
\title{Efficient Turing Computability, Compositionally Sparse DAGs, and Neural Approximation}
\author{Tomaso Poggio \and GPT-5}
\date{\vspace{-0.5em}}
\begin{document}
\maketitle

\begin{abstract}
We show that \emph{efficient Turing computability} at any fixed input/output precision implies the existence of \emph{compositionally sparse} (bounded-fan-in, polynomial-size) DAG representations and of corresponding neural approximants achieving the target precision. Concretely: if $f:[0,1]^d\to\R^m$ is computable in time polynomial in the bit-depths, then for every pair of precisions $(n,m_{\mathrm{out}})$ there exists a bounded-fan-in Boolean circuit of size and depth $\poly(n+m_{\mathrm{out}})$ computing the discretized map; replacing each gate by a constant-size neural emulator yields a deep network of size/depth $\poly(n+m_{\mathrm{out}})$ that achieves accuracy $\varepsilon=2^{-m_{\mathrm{out}}}$. We also relate these constructions to compositional approximation rates \cite{MhaskarPoggio2016b,poggio_deep_shallow_2017,Poggio2017,Poggio2023HowDS} and to optimization viewed as hierarchical search over sparse structures. 
\end{abstract}

\section{Introduction}
Computability theory gives an abstract notion of algorithm, while learning and approximation theory work with finite-size representations such as circuits or neural networks. Our goal is to connect the two: to show that every efficiently computable function has, at each finite precision, an equally efficient \emph{structured representation}---namely, a sparse compositional DAG with bounded local arity---and a correspondingly efficient neural approximant.

The thesis is that \emph{efficient Turing computability} induces \emph{compositional sparsity}. This structural sparsity explains (i) favorable approximation rates that depend on the \emph{local} arity rather than the ambient dimension \cite{MhaskarPoggio2016b,poggio_deep_shallow_2017,Poggio2017}; (ii) improved generalization due to reduced effective capacity; and (iii) an \emph{optimization-as-search} advantage: hierarchical, branch-and-bound style procedures operate over a polynomial-size subspace instead of an exponential one (see also \cite{Poggio2023HowDS,poggiofraser2024}).

\paragraph{A note on uniformity}
When translating Turing computations to circuit families, one can ask whether the circuit for each precision can be \emph{generated} efficiently (``uniformity''). While relevant for constructive procedures, it is not essential to our main structural message; we confine it to \Cref{rem:uniformity}. For background on complexity-theoretic context, see \cite{Arora_Barak_2009,Sipser2006}.

\section{Setup: Precision, Discretization, and Families}
\label{sec:setup}
To connect continuous functions with discrete computational models, we represent inputs/outputs at finite precision. For $x\in[0,1]^d$, let $Q_n$ encode each coordinate into $n$ bits. For outputs, $Q^{\mathrm{out}}_{m_{\mathrm{out}}}$ and its decoder $\mathsf{Dec}_{m_{\mathrm{out}}}$ translate between $m_{\mathrm{out}}$ bits and $\R^m$.

Define the discrete map
\[
  F_{n,m_{\mathrm{out}}} \;=\; Q^{\mathrm{out}}_{m_{\mathrm{out}}}\circ f\circ Q_n^{-1},
\]
which captures how $f$ acts on the $n$-bit grid when outputs are represented with $m_{\mathrm{out}}$ bits. If $f$ is efficiently computable in the bit-depths, then each $F_{n,m_{\mathrm{out}}}$ is computable in time $\poly(n+m_{\mathrm{out}})$. Studying this \emph{family} $\{F_{n,m_{\mathrm{out}}}\}$ lets us build circuits and then neural networks that approximate $f$ to arbitrary accuracy as precision increases.

\begin{definition}[Efficient Turing computability at precision]
\label{def:ETC}
We say $f:[0,1]^d\to\R^m$ is \emph{efficiently Turing computable} if there exist a deterministic Turing machine $M$ and a polynomial $p$ such that for all $(n,m_{\mathrm{out}})$ and $x\in[0,1]^d$, given $Q_n(x)$ and $m_{\mathrm{out}}$, the machine halts within $p(n+m_{\mathrm{out}})$ steps and outputs $y\in\bits^{m m_{\mathrm{out}}}$ with
\[
  \big\|\mathsf{Dec}_{m_{\mathrm{out}}}(y)-f(x)\big\|_\infty\le 2^{-m_{\mathrm{out}}}.
\]
Equivalently, $F_{n,m_{\mathrm{out}}}$ is computable in time $\poly(n+m_{\mathrm{out}})$.
\end{definition}

\begin{definition}[Compositionally sparse family]
\label{def:cs}
A family $\{h_{n,m_{\mathrm{out}}}\}$ is \emph{compositionally sparse} if, for each $(n,m_{\mathrm{out}})$, $h_{n,m_{\mathrm{out}}}$ factors over a DAG with bounded local arity $k=\mathcal{O}(1)$, total internal nodes $s(n,m_{\mathrm{out}})$, and depth $L(n,m_{\mathrm{out}})$, where $s,L\le\poly(n+m_{\mathrm{out}})$.
\end{definition}

\begin{definition}[Sparse polynomial (operational)]
\label{def:sparse-poly}
A polynomial $p:\R^d\to\R$ is \emph{sparse (in variables)} if it \emph{depends on at most $k$ coordinates}, with $k\ll d$, and $k$ is small enough that there is effectively \emph{no curse of dimensionality} at practical precisions. Operationally, this means that for target precision $\varepsilon$, exhaustive discretization on the active variables is feasible on today's hardware; equivalently, the work to resolve $p$ to accuracy $\varepsilon$ scales polynomially in $k$ (and $\log(1/\varepsilon)$), not exponentially in $d$.
\end{definition}

\section{From Polytime to Bounded-Fan-In Circuits}
\label{sec:circuits}
We connect efficient computability to compositional sparsity via standard TM$\to$circuit simulation (see \cite{Arora_Barak_2009,Sipser2006}). The outline:

\begin{itemize}
  \item A polynomial-time Turing machine $M$ running for $p(n+m_{\mathrm{out}})$ steps can be \emph{unrolled} into a Boolean circuit whose gates represent $M$'s local update rule and whose wiring reflects information flow between tape cells across time.
  \item Each update depends on a constant number of tape symbols and the finite control state, giving \emph{bounded fan-in} (typically $\le 3$).
  \item The resulting circuit has size and depth polynomial in $n+m_{\mathrm{out}}$, hence yields a compositionally sparse DAG.
\end{itemize}

\begin{lemma}[Bounded-fan-in circuits for $F_{n,m_{\mathrm{out}}}$]
\label{lem:circuits}
If $f$ is efficiently computable (Def.~\ref{def:ETC}), then for each $(n,m_{\mathrm{out}})$ there exists a bounded-fan-in Boolean circuit $C_{n,m_{\mathrm{out}}}$ of size and depth $\poly(n+m_{\mathrm{out}})$ computing $F_{n,m_{\mathrm{out}}}$.
\end{lemma}

\begin{lemma}[Circuits are compositional DAGs]
\label{lem:circ2dag}
Viewing each gate of $C_{n,m_{\mathrm{out}}}$ as a node with fan-in $\le 3$ yields a DAG with local arity $k\le 3$, size $s=\#\mathrm{gates}=\poly(n+m_{\mathrm{out}})$, and depth $L=\poly(n+m_{\mathrm{out}})$. Thus $\{F_{n,m_{\mathrm{out}}}\}$ is compositionally sparse (Def.~\ref{def:cs}).
\end{lemma}

\begin{remark}[Uniformity, briefly]
\label{rem:uniformity}
One may additionally require that $C_{n,m_{\mathrm{out}}}$ be \emph{generated} from $(n,m_{\mathrm{out}})$ by a polynomial-time transducer (``P-uniformity''). This holds under the same hypothesis; see \cite{Arora_Barak_2009,Sipser2006} for background. We mention this only to note that the construction is algorithmic; the main results do not otherwise rely on it.
\end{remark}

\section{Neural Emulation at Fixed Precision}
\label{sec:nn}
We replace each Boolean gate by a small neural subnetwork that emulates its logic, using quantitative ReLU constructions (e.g., \cite{yarotsky2018optimal}) and depth-efficiency phenomena \cite{eldan2016power,telgarsky2016benefits}:

\begin{enumerate}
  \item A gate with $\le 3$ inputs can be realized by a constant-size neural gadget under standard activations (ReLU, sigmoid, softplus).
  \item Wiring these gadgets according to $C_{n,m_{\mathrm{out}}}$ preserves compositional sparsity.
  \item Local approximation errors propagate multiplicatively; \Cref{app:error} provides a telescoping bound. Choosing per-gate accuracies appropriately yields overall error $\le 2^{-m_{\mathrm{out}}}$.
\end{enumerate}

\begin{lemma}[Gate emulation]
\label{lem:gate}
For standard activations, each Boolean gate $g:\bits^r\to\bits$ with $r\le 3$ admits a constant-size neural subnetwork $N_g$ that computes $g$ exactly on $\bits^r$, or with error $\le \varepsilon$ on a $\delta$-neighborhood (size depending only on $\log(1/\varepsilon)$, $\log(1/\delta)$) \cite{yarotsky2018optimal}. Multi-bit wires are handled in parallel.
\end{lemma}

\begin{lemma}[Circuit $\Rightarrow$ neural network]
\label{lem:wire}
Replacing each gate in $C_{n,m_{\mathrm{out}}}$ by $N_g$ and wiring accordingly yields a network $\Phi_{n,m_{\mathrm{out}}}$ of size/depth $\poly(n+m_{\mathrm{out}})$ that computes (or $\varepsilon$-approximates) $F_{n,m_{\mathrm{out}}}$ on encoded inputs $Q_n(x)$. Choosing sub-gate accuracies so the accumulated error is $\le 2^{-m_{\mathrm{out}}}$ gives the desired precision.
\end{lemma}

\begin{theorem}[Efficient computability $\Rightarrow$ compositional sparsity and neural approximation]
\label{thm:main}
If $f$ is efficiently computable (Def.~\ref{def:ETC}), then for every $(n,m_{\mathrm{out}})$ there exist:
\begin{enumerate}
  \item a compositionally sparse DAG for $F_{n,m_{\mathrm{out}}}$ with $k\le 3$, $s\le \poly(n+m_{\mathrm{out}})$, $L\le \poly(n+m_{\mathrm{out}})$ (Lemmas~\ref{lem:circuits}--\ref{lem:circ2dag}); and
  \item a deep network $\Phi_{n,m_{\mathrm{out}}}$ with size/depth $\poly(n+m_{\mathrm{out}})$ such that
  \[
    \big\|\Phi_{n,m_{\mathrm{out}}}(Q_n(x)) - f(x)\big\|_\infty \le 2^{-m_{\mathrm{out}}},\qquad \forall x\in[0,1]^d.
  \]
\end{enumerate}
Equivalently, for $\varepsilon=2^{-m_{\mathrm{out}}}$ and $M=n+\lceil\log_2(1/\varepsilon)\rceil$, there is $\Phi_\varepsilon$ with size/depth $\poly(M)$ achieving $\sup_{x}\|\Phi_\varepsilon(Q_n(x)) - f(x)\|_\infty\le \varepsilon$.
\end{theorem}

\begin{remark}[Upgrading to uniform approximation]
If $f$ is $L_f$-Lipschitz, then for any $x$ pick $x'$ on the $n$-bit grid with $\|x-x'\|_\infty\le 2^{-n}$. Then
\[
\|\Phi_\varepsilon(Q_n(x)) - f(x)\|_\infty \le \varepsilon + L_f\,2^{-n}.
\]
Choosing $n \asymp \log(1/\varepsilon)$ yields uniform $\varepsilon$-approximation on $[0,1]^d$ with size $\poly(\log(1/\varepsilon))$.
\end{remark}

\section{Relation to Compositional Approximation}
Classical results (e.g., \cite{MhaskarPoggio2016b,poggio_deep_shallow_2017,Poggio2017,Poggio2023HowDS}) show that if a function factors through a DAG with local constituents of arity $k$ and smoothness $r$, then deep networks achieve
\[
  N \;=\; \mathcal{O}\big(s\,\varepsilon^{-k/r}\big),
\]
with $s$ the number of active constituents and with ambient dimension $d$ entering only through $s$. The bounded local arity $k\le 3$ supplied by efficient computability aligns with this intrinsic-dimension view. Depth-efficiency phenomena further support the structural advantage of compositional models \cite{eldan2016power,telgarsky2016benefits}. \paragraph{Depth-efficiency phenomena.}
A \emph{depth-efficiency theorem} shows that certain functions representable compactly by a 
\emph{deep} network (with bounded width and polynomial size) require 
\emph{exponentially larger} \emph{shallow} networks or circuits to approximate to the same accuracy.
Formally, a family of functions $\{f_d\}$ exhibits \emph{depth efficiency} if there exists a constant $c>0$ such that for some depth $L$,
\[
\text{size}_{L\text{-layer}}(f_d) = \mathrm{poly}(d)
\quad\text{but}\quad
\text{size}_{(L-1)\text{-layer}}(f_d) \ge \exp(c\,d).
\]
Reducing depth thus forces an exponential blow-up in the number of units.

Several key results established this phenomenon. 
\emph{Eldan and Shamir}~\cite{eldan2016power} proved that there exist radial functions 
$f:\mathbb{R}^d\!\to\!\mathbb{R}$ representable by a depth-3 network of polynomial width, 
but any depth-2 network requires exponentially many neurons to approximate them.
\emph{Telgarsky}~\cite{telgarsky2016benefits} constructed explicit piecewise-linear examples showing that 
increasing depth linearly yields an exponential gain in expressive efficiency:
a ReLU network of depth $O(L)$ can represent functions with $2^{\Omega(L)}$ oscillations, 
impossible for shallower networks without exponential width.

The structural reason behind depth-efficiency aligns precisely with 
\emph{compositional sparsity}.
A deep compositional function can be viewed as a hierarchy of local transformations
\[
f \;=\; f_L \circ f_{L-1} \circ \cdots \circ f_1,
\]
where each $f_\ell$ depends only on a small number $k$ of inputs.
A shallow (non-compositional) network must effectively learn the entire 
high-arity map $f(x_1,\dots,x_d)$ in one layer, 
which destroys locality and leads to exponential complexity.
Thus, \emph{depth acts as a proxy for hierarchical composition}:
for such functions,
\[
\mathrm{complexity}_{\text{deep}}(f) = \mathrm{poly}(d),
\qquad
\mathrm{complexity}_{\text{shallow}}(f) = \exp(\Theta(d)).
\]

\paragraph{Example.}
Consider $f(x_1,\dots,x_8)=(((x_1x_2)(x_3x_4))((x_5x_6)(x_7x_8)))$,
a binary-tree composition where each local operation has fan-in~2.
A depth-4 network of small width can represent $f$ exactly, 
while a depth-2 (flat) network would require exponentially many neurons 
to emulate the same product structure.
This exponential blow-up exemplifies \emph{depth efficiency} 
and reveals its origin in \emph{compositional sparsity}.

\paragraph{Interpretation.}
Depth-efficiency theorems therefore show that depth, 
when aligned with the compositional structure of the target function, 
is not merely an architectural convenience but a 
\emph{mathematical necessity} for the efficient representation of 
compositionally sparse (and thus efficiently computable) functions.

\section{Relation to Autoregressive Universality}
Autoregressive universality (e.g., \cite{malach_autoregressive_2023}) gives a complementary learning-theoretic statement: efficiently computable functions admit datasets on which simple next-token predictors become universal. We recast this in our setting.

\begin{theorem}[Autoregressive learnability via compositional sparsity]
\label{thm:ar-learn}
Let $f$ be efficiently Turing computable and let $(F_{n,m_{\mathrm{out}}})$ be its finite-precision family. For each $(n,m_{\mathrm{out}})$ there exists a dataset $\mathcal{D}_{n,m_{\mathrm{out}}}$ over sequences encoding the computation DAG of $F_{n,m_{\mathrm{out}}}$ such that training a linear (or shallow) autoregressive next-token predictor on $\mathcal{D}_{n,m_{\mathrm{out}}}$ yields a predictor $\hat F_{n,m_{\mathrm{out}}}$ satisfying
\[
\Pr\big[\hat F_{n,m_{\mathrm{out}}}(x)=F_{n,m_{\mathrm{out}}}(x)\big]\ \ge\ 1-\delta
\]
with sample complexity and training time polynomial in $s(n,m_{\mathrm{out}})$, $\log(1/\delta)$, and $m_{\mathrm{out}}$. Consequently, composing these predictors yields a neural approximant $\Phi_{n,m_{\mathrm{out}}}$ with accuracy $\varepsilon=2^{-m_{\mathrm{out}}}$ and size/depth $\poly(n+m_{\mathrm{out}})$. \emph{Cf.\ the universality construction of \cite{malach_autoregressive_2023}.}
\end{theorem}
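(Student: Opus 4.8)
The plan is to reduce the problem of learning $F_{n,m_{\mathrm{out}}}$ to independently learning the $s$ \emph{local gate rules} of the circuit $C_{n,m_{\mathrm{out}}}$ produced by \Cref{lem:circuits,lem:circ2dag}, by exposing the entire computation as a token sequence in the style of \cite{malach_autoregressive_2023}. Fix $(n,m_{\mathrm{out}})$ and write $C=C_{n,m_{\mathrm{out}}}$, with gates $g_1,\dots,g_s$ listed in a fixed topological order ($s,L\le\poly(n+m_{\mathrm{out}})$, fan-in $\le 3$). I would use a tokenization in which a sequence has three blocks: (a) a \emph{program block} recording, for each $i$, the type $\mathrm{type}(g_i)$ and the positions of the $\le 3$ argument tokens of $g_i$ --- this block depends only on $(n,m_{\mathrm{out}})$, not on the input; (b) an \emph{input block} holding $Q_n(x)$; (c) a \emph{trace block} holding the gate values $v_1,\dots,v_s$ in order, followed by the $m\,m_{\mathrm{out}}$ designated output bits. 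For a given $x$ the concatenation of (a)--(c) is the unique valid sequence, and the reason for carrying the trace is that along it the next-token map is not the whole circuit but a \emph{single} bounded-arity step: emitting $v_i$ means reading the already-written values $v_{j_1},v_{j_2},v_{j_3}$ at the addresses recorded in the program block and applying $\mathrm{type}(g_i)$. By the construction of \cite{malach_autoregressive_2023} there is a fixed feature map $\phi$ of dimension $\poly(s)$ --- positional and local-window features together with the degree-$\le 3$ multilinear monomials in the pointed-to values --- under which this next-token map is computed \emph{exactly} by one linear layer (equivalently a one-hidden-layer net), uniformly over all positions and gates, since every $h:\bits^{r}\to\bits$ with $r\le 3$ lies in the span of those monomials.

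Next I would specify $\mathcal{D}_{n,m_{\mathrm{out}}}$. Using the freedom to \emph{choose} the dataset, for every gate $g_i$ and every one of its $\le 8$ argument patterns $b$ I include a representative labelled pair: a prefix that reaches the slot for $v_i$ with $v_{j_1}v_{j_2}v_{j_3}=b$, paired with next token $\mathrm{type}(g_i)(b)$. Such a prefix exists because one may run $C$ forward on a suitable input, or simply emit a consistent partial trace directly; thus $\mathcal{D}_{n,m_{\mathrm{out}}}$ has $O(s)$ essential pairs, and if the learner instead receives i.i.d.\ draws from the induced distribution, a sample of size $O(s\log(s/\delta))$ contains all of them with probability $\ge 1-\delta$ by a coupon-collector union bound. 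The learning problem is then realizable in the linear class above; empirical risk minimization returns a consistent hypothesis in time $\poly(s)$, and a consistent hypothesis must output $\mathrm{type}(g_i)(b)$ for every $i$ and every occurring $b$, hence agrees with the true next-token rule on \emph{all} reachable local windows. Consequently the autoregressive rollout of $\hat F_{n,m_{\mathrm{out}}}$ on any encoded input reproduces the exact trace of $C$, so $\hat F_{n,m_{\mathrm{out}}}=F_{n,m_{\mathrm{out}}}$ pointwise and $\Pr[\hat F_{n,m_{\mathrm{out}}}=F_{n,m_{\mathrm{out}}}]\ge 1-\delta$ (the probability over the draw of $\mathcal{D}_{n,m_{\mathrm{out}}}$); sample complexity and training time are $\poly(s,\log(1/\delta))$, and $m_{\mathrm{out}}\le s$ makes the stated dependence on $m_{\mathrm{out}}$ automatic.

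To obtain the neural approximant $\Phi_{n,m_{\mathrm{out}}}$ I would ``unroll'' the learned predictor: applying the learned one-layer next-token module $s$ times, with the trace block as scratch memory and the program block baked into the weights, is itself a feed-forward computation of size and depth $\poly(s)=\poly(n+m_{\mathrm{out}})$ --- this is exactly \Cref{lem:wire} (equivalently \Cref{thm:main}) applied to the now exactly-recovered circuit. Since $\hat F_{n,m_{\mathrm{out}}}=F_{n,m_{\mathrm{out}}}$ and $F_{n,m_{\mathrm{out}}}$ is $2^{-m_{\mathrm{out}}}$-accurate for $f$ by \Cref{def:ETC}, we get $\|\Phi_{n,m_{\mathrm{out}}}(Q_n(x))-f(x)\|_\infty\le 2^{-m_{\mathrm{out}}}$ for all $x\in[0,1]^d$.

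I expect the main obstacle to be controlling error amplification under autoregressive rollout: a predictor with small but nonzero per-step error can drift over a trace of length $\poly(s)$, so the argument must establish \emph{exact} recovery of every local rule rather than a small population-risk bound --- which is precisely why the dataset is engineered to exercise all $\le 8$ input patterns of every gate, rather than drawing inputs from a ``natural'' distribution under which a deep gate's input patterns may carry exponentially small probability. A secondary point needing care is pinning down the tokenization so that next-token is a \emph{fixed} linear map of the prefix uniformly across positions --- i.e.\ encoding the wiring so that $\phi$ linearly ``gathers'' the three relevant values; I would import this device from \cite{malach_autoregressive_2023} rather than reconstruct it.
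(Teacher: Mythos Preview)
Your proposal is correct and follows essentially the same approach as the paper's (brief) proof idea: encode gate evaluations of the bounded-fan-in DAG from \Cref{lem:circuits,lem:circ2dag} as tokens, learn the gate-wise conditionals from polynomial data (exploiting fan-in $\le 3$), and chain them to recover $F_{n,m_{\mathrm{out}}}$, mirroring \cite{malach_autoregressive_2023}. You supply substantially more detail than the paper does---the explicit three-block tokenization, the degree-$\le 3$ multilinear feature map, the engineered dataset covering all $\le 8$ patterns per gate, the coupon-collector sample bound, and the exact-recovery argument addressing rollout drift---all of which are sound and flesh out what the paper leaves as a sketch.
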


\begin{proof}[Proof idea]
Encode each gate evaluation in the computation DAG of $F_{n,m_{\mathrm{out}}}$ as a token and factor the joint distribution along DAG edges. Under this factorization, local predictors learn gate-wise conditionals from polynomial data (since fan-in is bounded), and chaining them recovers $F_{n,m_{\mathrm{out}}}$ with high probability. The construction mirrors \cite{malach_autoregressive_2023}, instantiated on the bounded-fan-in DAG furnished by \Cref{lem:circuits,lem:circ2dag}.
\end{proof}

\section{Boolean vs.\ Real: From Discrete Circuits to Smooth Networks}
At each precision we constructed a bounded-fan-in Boolean circuit $F:\{0,1\}^n\to\{0,1\}^m$, while the target $f:[0,1]^d\to\R^m$ is real-valued. The \emph{smooth lift} bridges the two: replace each gate by a smooth bump that matches the Boolean operation on hypercube vertices and interpolates between them.

\begin{proposition}[Smooth lift of sparse Boolean maps]
Let $F: \bits^n\to\bits^m$ have a bounded-fan-in circuit of size $s$. For any $\varepsilon>0$ there exists a piecewise-smooth $f_\varepsilon:[0,1]^n\to\R^m$ that matches $F$ on vertices, is within $\varepsilon$ on an $\mathcal{O}(\varepsilon)$-neighborhood, and factors through a bounded-fan-in DAG of size $\mathcal{O}\big(s\,\mathrm{polylog}(1/\varepsilon)\big)$.
\end{proposition}

\paragraph{Discussion.}
Thus the finite-precision discrete structure implied by efficient computability can be “lifted’’ to smooth, real-valued compositional networks. Efficient computability $\Rightarrow$ compositional sparsity $\Rightarrow$ efficient approximation, and (via hierarchical search over the DAG) efficient \emph{optimization} as well \cite{Poggio2023HowDS,poggiofraser2024}.

\paragraph{Acknowledgments.}
We thank colleagues for discussions on circuit generation and compositional structure. Supported by the Center for Brains, Minds and Machines (CBMM, NSF STC award CCF-1231216).

\bibliographystyle{amsplain}
\bibliography{references}
\newpage
\appendix

\section*{Appendix A: Error Propagation in Compositions}
\label{app:error}
If $f=f_L\circ\cdots\circ f_1$ and each $f_i$ is approximated by $\tilde f_i$ with 
$\|f_i-\tilde f_i\|_\infty\le \epsilon$ and $\mathrm{Lip}(f_i)\le K_i$, then
\[
\|f-\tilde f_L\circ\cdots\circ\tilde f_1\|_\infty 
\le \sum_{i=1}^L \Big(\prod_{j=i+1}^L K_j\Big)\,\epsilon 
\le L\,K^{L-1}\epsilon\quad (K=\max_j K_j).
\]
This telescoping bound quantifies how local errors accumulate and guides per-gate accuracy in \Cref{lem:wire}.

\section*{Appendix B: An Alternative Proof that Efficiently Turing Computable Functions are Compositionally Sparse}

\begin{theorem}[TM $\Rightarrow$ LTF circuits $\Rightarrow$ bounded-fan-in Boolean DAGs]
\label{thm:tm-to-ltf}
Let $f:\{0,1\}^n \to \{0,1\}$ be computable by a deterministic Turing machine $M$
in time $T(n)$. Then for each input length $n$ there exists:
\begin{enumerate}
    \item a Boolean circuit $C^{\mathrm{Bool}}_n$ of size and depth $\mathrm{poly}(T(n))$ computing $f$ on $\{0,1\}^n$;
    \item a (linear) threshold circuit $C^{\mathrm{LTF}}_n$ of size $\mathrm{poly}(T(n))$ computing $f$; and
    \item a bounded-fan-in Boolean circuit $C^{\mathrm{bfi}}_n$ of size and depth $\mathrm{poly}(T(n))$ computing $f$,
\end{enumerate}
and hence a compositionally sparse DAG representation for $f$ at input length $n$.
\end{theorem}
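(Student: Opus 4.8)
The plan is to run the classical tableau (Cook--Levin) simulation of $M$ (see \cite{Arora_Barak_2009,Sipser2006}) and read off all three circuit forms from the same construction, closing with \Cref{lem:circ2dag}. First I would fix the input length $n$, write $T=T(n)$, and---at the cost of a quadratic slowdown absorbed into $\poly(T)$---assume $M$ is single-tape. I would lay out the run of $M$ as an $O(T)\times O(T)$ array of cells $c_{t,i}$, each recording the tape symbol at position $i$ after step $t$ together with a flag for whether the head is at $i$ and, if so, the control state. The key point is that $c_{t+1,i}$ is a \emph{fixed finite function} of the constant-size window $(c_{t,i-1},c_{t,i},c_{t,i+1})$, hence computable by an $O(1)$-size gadget, and that gadget can be chosen to be a formula over $\{\wedge,\vee,\neg\}$ of fan-in $\le 2$. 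Wiring these gadgets across the array and reading the output from the final row yields a circuit of size $O(T^2)$ and depth $O(T)$ (one $O(1)$ contribution per time step), i.e.\ $\poly(T(n))$; this is $C^{\mathrm{Bool}}_n$ of item~1, and since every gadget already has fan-in $\le 2$ the same circuit also serves as $C^{\mathrm{bfi}}_n$ of item~3.

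For item~2 I would replace each fan-in-$2$ Boolean gate by its threshold realization---$x_1\wedge x_2=\mathbf{1}[x_1+x_2\ge 2]$, $x_1\vee x_2=\mathbf{1}[x_1+x_2\ge 1]$, $\neg x=\mathbf{1}[1-x\ge 1]$---which changes only the gate labels, so $C^{\mathrm{LTF}}_n$ has size $O(T^2)=\poly(T(n))$. (If one instead prefers to route through genuinely unbounded-fan-in threshold gates, the reverse conversion is also routine: an LTF on $w$ inputs with $\poly(w)$-bit integer weights is iterated addition followed by a comparison, which lies in $\mathsf{NC}^1$---a bounded-fan-in Boolean subcircuit of size $\poly(w)$ and depth $O(\log w)$---and with $w\le\poly(T(n))$ this keeps size and depth polynomial.) Finally, reading each gate of $C^{\mathrm{bfi}}_n$ as a DAG node makes \Cref{lem:circ2dag} apply verbatim---local arity $k\le 3$, size and depth $\poly(T(n))$---so $f$ has a compositionally sparse DAG representation in the sense of \Cref{def:cs}.

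I expect the only real care to be the bookkeeping of the tableau step: encoding the head position and control state inside one cell, handling the tape boundaries, and carrying out the multi-tape-to-single-tape reduction, so as to certify that the local update genuinely depends on $O(1)$ neighbouring cells and that the array is $O(T)\times O(T)$. Everything after that is mechanical substitution. The one genuine obstacle one could imagine---forcing the threshold circuit to be \emph{shallow} (constant depth)---is deliberately not demanded here, and would in fact be false in general (it would imply $\PP\subseteq\mathsf{TC}^0$), so the naive construction is all that is needed.
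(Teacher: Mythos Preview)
Your plan is correct and complete; the tableau construction, the local $O(1)$ gadgets with fan-in $\le 2$, and the gate-by-gate threshold substitutions are all sound, and your remark that demanding constant depth would be too strong is well taken.

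The route you take differs from the paper's in one organizational respect worth noting. You obtain items~1 and~3 simultaneously from the tableau construction, observing that the local update gadgets already have fan-in $\le 2$, and then derive item~2 by relabeling Boolean gates as threshold gates. The paper instead proceeds $\text{TM}\Rightarrow\text{Boolean}\Rightarrow\text{LTF}\Rightarrow\text{bounded-fan-in Boolean}$: it first builds $C^{\mathrm{Bool}}_n$, converts each gate to an LTF to get $C^{\mathrm{LTF}}_n$, and then---treating the LTF circuit as the primary object---replaces each threshold gate by an adder-plus-comparator subcircuit to recover $C^{\mathrm{bfi}}_n$. Your path is more economical (you never leave the bounded-fan-in world and need no adder/comparator machinery for item~3), while the paper's path makes the LTF$\Rightarrow$bounded-fan-in direction explicit, which is exactly the conversion you sketch parenthetically via $\mathsf{NC}^1$ iterated addition. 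Both arrive at the same conclusion with the same $\poly(T(n))$ bounds.
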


\begin{proof}[Proof (expanded sketch)]
\emph{Step 1: Turing machine to Boolean circuits.}
By the standard simulation (configuration graph unrolling), a deterministic TM running in time $T(n)$ on inputs of length $n$ is simulated by a Boolean circuit of size and depth $\mathrm{poly}(T(n))$. Intuitively, each time step of $M$ contributes a layer whose gates enforce the local update rule on tape cells and finite control. A careful construction yields size $O(T(n)\log T(n))$ and comparable depth (polylog factors are not important for our purposes). See, e.g., \cite[Ch.~6]{Arora_Barak_2009} or \cite[Ch.~9]{Sipser2006}. Denote this circuit by $C^{\mathrm{Bool}}_n$.

\smallskip
\emph{Step 2: Boolean circuits to linear threshold circuits.}
Each Boolean gate ($\mathrm{AND}$, $\mathrm{OR}$, $\mathrm{NOT}$) is a special case of a linear threshold gate (LTF): there exist integer weights and a threshold such that
\[
\mathrm{AND}(x,y)=\mathbf{1}\!\{x+y\ge 2\},\quad
\mathrm{OR}(x,y)=\mathbf{1}\!\{x+y\ge 1\},\quad
\mathrm{NOT}(x)=\mathbf{1}\!\{-x\ge 1\}.
\]
Thus we can obtain a threshold circuit $C^{\mathrm{LTF}}_n$ by replacing each Boolean gate in $C^{\mathrm{Bool}}_n$ with the corresponding LTF. This preserves size up to a constant factor and does not increase depth. Hence $C^{\mathrm{LTF}}_n$ has size $\mathrm{poly}(T(n))$ and computes $f$.\footnote{See standard treatments of threshold/majority bases and their relationship to $\{\mathrm{AND},\mathrm{OR},\mathrm{NOT}\}$ bases in \cite{Arora_Barak_2009,Sipser2006}.}

\smallskip
\emph{Step 3: Each LTF gate is realizable by a small bounded-fan-in Boolean subcircuit.}
An LTF gate on $r$ inputs computes $\mathbf{1}\{\sum_{i=1}^r w_i x_i \ge \theta\}$ for integers $w_i,\theta$. This can be implemented by (i) a Boolean adder network that computes the binary representation of $S=\sum_i w_i x_i$ and (ii) a comparator circuit that checks $S\ge \theta$. Both components have bounded fan-in (e.g., 2) and size $\tilde O(r)$ or $\tilde O(r\log r)$ (carry-save/parallel prefix adders; comparators), with depth $O(\log r)$; see standard constructions in \cite[Sec.~14.2]{Arora_Barak_2009} and \cite[Ch.~10]{Sipser2006}. Therefore, every LTF gate of fan-in $r$ can be replaced by a bounded-fan-in Boolean \emph{subcircuit} of size $\mathrm{poly}(r)$ and depth $O(\log r)$.

\smallskip
\emph{Step 4: Global replacement and size/depth bounds.}
Apply Step~3 to every gate of $C^{\mathrm{LTF}}_n$. If the maximum gate fan-in in $C^{\mathrm{LTF}}_n$ is $r_{\max}$, the resulting bounded-fan-in Boolean circuit $C^{\mathrm{bfi}}_n$ has size inflated by at most a factor $\mathrm{poly}(r_{\max})$ and depth increased by at most $O(\log r_{\max})$ per original LTF layer. Since $r_{\max}\le \mathrm{size}(C^{\mathrm{LTF}}_n)=\mathrm{poly}(T(n))$, the overall size and depth remain $\mathrm{poly}(T(n))$.

\smallskip
\emph{Step 5: Compositional sparsity.}
Viewing $C^{\mathrm{bfi}}_n$ as a DAG in which each node has in-degree bounded by a constant (e.g.,~2), we obtain a compositionally sparse representation of $f$ (bounded local arity, polynomial size, and polynomial depth). This completes the proof.
\end{proof}

\begin{remark}[On references for threshold simulation]
The TM$\to$Boolean-circuit unrolling and balanced bounded-fan-in transformations are covered in \cite[Chs.~6, 14]{Arora_Barak_2009} and \cite[Ch.~9--10]{Sipser2006}. The fact that Boolean gates are special cases of linear threshold gates is immediate from their definitions, and the converse realization of an LTF gate by bounded-fan-in Boolean circuitry follows from standard adder and comparator constructions (carry-save or parallel-prefix adders yield size $\tilde O(r)$ and depth $O(\log r)$ for summing $r$ inputs). If desired, we can also cite dedicated treatments of threshold logic (e.g., Muroga’s monograph; Wegener’s text; Parberry’s book on circuit complexity and neural networks) for historical context.
\end{remark}

\begin{corollary}[Efficient computability $\Rightarrow$ compositional sparsity]
If $f$ is computable in time $T(n)$ by a deterministic TM, then for each input length $n$ there exists a bounded-fan-in Boolean DAG of size and depth $\mathrm{poly}(T(n))$ computing $f$. Hence efficiently Turing-computable functions are compositionally sparse at each input length.
\end{corollary}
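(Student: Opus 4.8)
The plan is to obtain the corollary as an immediate consequence of Theorem~\ref{thm:tm-to-ltf} together with the definition of compositional sparsity (Definition~\ref{def:cs}), so the ``proof'' is essentially a bookkeeping step. First I would invoke part~(3) of Theorem~\ref{thm:tm-to-ltf}: for a deterministic TM running in time $T(n)$ that computes $f$ on inputs of length $n$, there is a bounded-fan-in Boolean circuit $C^{\mathrm{bfi}}_n$ of size and depth $\poly(T(n))$ computing $f$. The only thing left to check is that such a circuit literally \emph{is} a compositionally sparse DAG in the sense of Definition~\ref{def:cs}: read each gate as an internal node, each wire as a directed edge from producer to consumer, input bits as sources and output bits as sinks. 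Bounded fan-in (say $\le 3$) is exactly bounded local arity $k=\mathcal O(1)$; the gate count is the number of internal nodes $s=\poly(T(n))$; and the longest source-to-sink path is the circuit depth $L=\poly(T(n))$. When $f$ is efficiently Turing computable we have $T(n)=\poly(n)$, so $s,L\le\poly(n)$, which is precisely Definition~\ref{def:cs} with $n$ as the precision parameter---giving the second sentence of the corollary.

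Second, I would note the (minor) gap between the single-output setting $f:\bits^n\to\bits$ of Theorem~\ref{thm:tm-to-ltf} and the statement's more general phrasing. For $m$ output bits, either apply the theorem coordinate-by-coordinate, or---more efficiently---unroll the single TM once and read off all $m$ output-tape cells; the result is one DAG with $m$ sinks whose size and depth are still $\poly(T(n))$, since a union of bounded-fan-in DAGs is bounded fan-in. If one instead wants the continuous reading $f:[0,1]^d\to\R^m$ at precision $(n,m_{\mathrm{out}})$, the relevant object is the discretized family $F_{n,m_{\mathrm{out}}}$ of \Cref{sec:setup}, and the corollary is the specialization of \Cref{lem:circ2dag} obtained by substituting $T=p(n+m_{\mathrm{out}})$. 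I would state the corollary in whichever of these forms matches the surrounding text and observe that the others follow identically.

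I do not expect a genuine obstacle here; the corollary repackages an already-proved theorem. The only delicate points are definitional: reconciling ``size'' in Definition~\ref{def:cs} (internal nodes) with ``size'' in Theorem~\ref{thm:tm-to-ltf} (number of gates)---they differ by at most the $O(n+m)$ input/output nodes, absorbed into $\poly$---and reconciling ``depth'' as longest directed path with layered depth, which agree up to constants for the layered circuits produced in Step~1 of the theorem's proof. Once these identifications are made, the proof reduces to one sentence: apply Theorem~\ref{thm:tm-to-ltf}(3) and reinterpret $C^{\mathrm{bfi}}_n$ as a DAG; bounded fan-in gives $k\le 3$ and the size/depth bounds give $s,L\le\poly(T(n))$, hence $\poly(n)$ in the efficient case, so $f$ is compositionally sparse by Definition~\ref{def:cs}.
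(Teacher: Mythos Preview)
Your proposal is correct and matches the paper's intent: the corollary is stated immediately after Theorem~\ref{thm:tm-to-ltf} with no separate proof, because it is meant to follow at once from part~(3) of that theorem together with the identification ``bounded-fan-in circuit $=$ compositionally sparse DAG'' (exactly as in Step~5 of the theorem's proof and in Lemma~\ref{lem:circ2dag}). Your additional remarks on multi-output $f$ and on the continuous/precision reading are sound but not required for the corollary as stated.
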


\section*{Appendix C: Composition of Sparse Fourier Polynomials}
An intuitive reason compositional sparsity matters is that composition preserves sparsity for \emph{sparse} (few-active-variable) Fourier polynomials, whereas it does not for dense ones.

\begin{theorem}[Sparsity of Composed Sparse Fourier Polynomials]
\label{thm:sparse-composition}
Let $f:\{-1,1\}^{d}\to\mathbb{R}$ be a Fourier polynomial of degree at most $k$ with sparsity 
$\mathrm{s}(f)=|\mathrm{Supp}(\widehat f)|=s$.
For each $i\in[d]$, let $g_i:\{-1,1\}^{d'}\to\mathbb{R}$ be a Fourier polynomial of degree at most $k'$ 
and sparsity $\mathrm{s}(g_i)\le s'$.
Define the composition
\[
h(x)\;=\;f\big(g_1(x),\dots,g_d(x)\big),\qquad x\in\{-1,1\}^{d'}.
\]
Then
\[
\deg(h)\ \le\ k\,k', \qquad
\mathrm{s}(h)\ \le\ 
\min\!\Big\{\, s\,(s')^{k}\,,\ \sum_{j=0}^{\min\{d',\,k k'\}}\binom{d'}{j}\,\Big\}.
\]
Moreover, these bounds are tight up to constants in the worst case.
\end{theorem}

\end{document}